\documentclass[twoside,letterpaper]{article}
\usepackage[preprint]{aistats2027}
\usepackage{amsmath,amssymb,mathtools,amsthm}
\usepackage{graphicx,booktabs,array,microtype,placeins,xurl}

\usepackage[hidelinks]{hyperref}
\newtheorem{theorem}{Theorem}
\newtheorem{lemma}{Lemma}
\newtheorem{corollary}{Corollary}
\newtheorem{proposition}{Proposition}

\newcommand{\EE}{\mathbb E}
\newcommand{\PP}{\mathbb P}
\newcommand{\norm}[1]{\lVert#1\rVert_2}
\newcommand{\eps}{\varepsilon}
\newcommand{\ws}{w^\star}
\newcommand{\cE}{\mathcal E}
\newcommand{\cW}{\mathcal W}
\DeclareMathOperator{\tr}{tr}
\DeclareMathOperator{\diag}{diag}
\hypersetup{pdftitle={Tail Geometry and Statistical Efficiency of Adversarial Ramp Fitting},pdfauthor={Kunyu Wang, Dehan Wang, Wenjun Chen}}

\begin{document}
\runningtitle{Tail Geometry and Statistical Efficiency of Adversarial Ramp Fitting}
\runningauthor{Kunyu Wang, Dehan Wang, and Wenjun Chen}
\twocolumn[
\aistatstitle{Tail Geometry and Statistical Efficiency of Adversarial Ramp Fitting}
\aistatsauthor{Kunyu Wang$^{1}$ \And Dehan Wang$^{1}$ \And Wenjun Chen$^{2}$}
\aistatsaddress{$^{1}$The Chinese University of Hong Kong \quad $^{2}$Columbia University\\
\texttt{kunyuwang@link.cuhk.edu.hk}\quad \texttt{dhwang@link.cuhk.edu.hk}\\
\texttt{wc2741@columbia.edu}}]
\begin{abstract}
Empirical ramp fitting can assign weight to pure-noise features even when the population optimum ignores them. We quantify this gap for norm-constrained adversarial classification with Gaussian signal and noise. The variance cost relative to normalized signed mean separates into two factors: selecting observations inside the active margin window and the curvature induced by the norm constraint. Changing the tail variance leaves the active-window probability unchanged but changes the second factor. With positive attack budget and a signal-only predictor of risk below one half, we prove a uniform quadratic tail-deletion bound, including at zero tail variance. Sufficiently accurate approximate global empirical minimizers admit exact fixed-dimensional asymptotic covariances in the low-risk regime with isotropic principal covariance. For positive tail variance at most principal variance, the product exceeds one; an additional moment condition transfers it to expected excess ramp and robust classification risks. A wide-window analysis characterizes when this ordering reverses. Controlled experiments test the decomposition, and a separate contamination study examines its scope outside the Gaussian training model.
\end{abstract}

\section{Introduction}
Empirical fitting can assign weight to pure noise even when the population optimum is tail-free. We study this discrepancy for norm-constrained adversarial ramp classifiers, showing how active-window selection and sphere-induced curvature jointly determine statistical efficiency.

The comparison is most revealing when the active-window probability and optimal direction stay fixed. Changing only the tail variance then changes relative efficiency through curvature, not through the number of active observations. In our nearly low-rank benchmark the combined cost is close to thirtyfold, although absolute tail error still vanishes as tail variance tends to zero. We make three contributions:
\begin{enumerate}
\item \textbf{Uniform population geometry.} With positive attack budget and a risk witness below $1/2$, tail deletion improves risk quadratically throughout a near-optimal set, including at zero tail variance (Theorem~\ref{thm:tail}).
\item \textbf{Exact relative efficiency.} Under isotropy and fixed dimension, sufficiently accurate approximate global ERM has explicit tangent covariances. The tail variance ratio to signed mean factors into active-window selection and constraint curvature; expectation limits require an additional moment condition (Theorem~\ref{thm:efficiency}, Corollaries~\ref{cor:inflation}--\ref{cor:riskseparation}).
\item \textbf{Window-dependent reversal.} A wider ramp window can reverse tail and aggregate risk orderings. Aggregate improvement requires tail gains to outweigh principal-direction costs (Proposition~\ref{prop:wide}).
\end{enumerate}

\paragraph{Related Work.}
Complexity bounds \cite{awasthi2020,yin2019}, sample-complexity separations \cite{schmidt2018} and Gaussian guarantees \cite{dan2020} study the statistical cost of robustness. Related work treats proportional limits \cite{javanmard2022}, benign overfitting \cite{chen2023}, robust representations \cite{awasthi2021representations} and latent models \cite{javanmard2024latent}. We compare two estimators at the same attack budget to isolate loss-induced costs.

Tanner et al.~\cite{tanner2025} analyze convex-loss empirical minimizers in proportional limits; Vilucchio et al.~\cite{vilucchio2024} study regularization geometry. Ho-Nguyen and Wright~\cite{honguyen2023} connect Wasserstein robust classification to regularized ramp minimization and establish uniqueness under their assumptions. Our setting instead uses a fixed-radius ramp objective and Gaussian signal and noise.

Koo et al.~\cite{koo2008} establish Bahadur representations and asymptotic covariance for linear hinge-loss SVM. Their Gaussian example links larger coefficient variance to fewer observations inside the margins. Sabato et al.~\cite{sabato2013} give a Gaussian-mixture sample-complexity separation between margin-error minimization and class-mean generative learning (Section~9, Example~2). Our fixed-dimensional ramp analysis isolates tail curvature at unchanged optimal direction and active-window probability, with exact asymptotic variances and classification-risk constants.

We use empirical-process tools \cite{bartlett2005}, related to adversarial fast rates \cite{mustafa2022} and nonsmooth inference \cite{wang2018svm}. Our Gaussian classification-risk expansion complements general calibration studies \cite{bao2020}.

\section{Model and Population Tail Structure}
Let $Y$ be uniform on $\{-1,1\}$ and
\begin{equation}\label{eq:model}
 \begin{gathered}
 X=(Z,\tau G),\qquad Z=Y\mu+\xi,\\
 \xi\sim N(0,\Lambda),\qquad G\sim N(0,I_k),
 \end{gathered}
\end{equation}
with independent $Y,\xi,G$. Here $r,k\ge1$, $B,\gamma>0$, $\eps\ge0$, $\tau\ge0$, and $\Lambda$ is positive definite. Write $w=(a,b)$, $\cW_B=\{w:\norm w\le B\}$ and $m=\norm\mu$. The exact Euclidean-adversarial margin and ramp loss are
\begin{equation}\label{eq:margin}
 \begin{gathered}
 M_w(x,y)=yw^\top x-\eps\norm w,\\
 \phi_\gamma(t)=\min\{1,\max\{0,1-t/\gamma\}\}.
 \end{gathered}
\end{equation}
For an iid sample of size $n$, let $\ell_w=\phi_\gamma(M_w)$, $R(w)=P\ell_w$, $R_n(w)=P_n\ell_w$, $R^\star=\min_{\cW_B}R$, and $\cE(w)=R(w)-R^\star$. The principal block is an analytical coordinate system, not information supplied to the full-space learner. The nearly low-rank regime has $\tau^2\ll\lambda_{\min}(\Lambda+\mu\mu^\top)$; small individual variance and small total tail energy $\tau^2k$ are distinct.

The margin is Gaussian with mean $\nu_w=a^\top\mu-\eps\norm w$ and variance $v_w=a^\top\Lambda a+\tau^2\norm b^2$. For $v>0$,
\begin{equation}\label{eq:risk}
 \begin{gathered}
 F_\gamma(\nu,v)=\frac1\gamma\int_0^\gamma
 \Phi\left(\frac{s-\nu}{\sqrt v}\right)\,ds,\\
 R(w)=F_\gamma(\nu_w,v_w).
 \end{gathered}
\end{equation}
At $v=0$, set $F_\gamma(\nu,0)=\phi_\gamma(\nu)$. This follows by integrating the indicators $\mathbf1\{M_w\le s\}$. Write $\varphi,\Phi$ for the standard normal density and distribution function.

\begin{samepage}
\begin{lemma}\label{lem:calculus}
For $v>0$ and $\pi(\nu,v)=\Phi((\gamma-\nu)/\sqrt v)-\Phi(-\nu/\sqrt v)$,
\begin{equation}\label{eq:derivatives}
 \begin{aligned}
 \partial_\nu F_\gamma&=-\pi(\nu,v)/\gamma<0,\\
 \partial_v F_\gamma&=
 \frac{\varphi((\gamma-\nu)/\sqrt v)-\varphi(\nu/\sqrt v)}
 {2\gamma\sqrt v}.
 \end{aligned}
\end{equation}
Moreover, $F_\gamma(\nu,v)<1/2$ iff $\nu>\gamma/2$, including $v=0$, and $\partial_vF_\gamma\ge0$ iff $\nu\ge\gamma/2$.
\end{lemma}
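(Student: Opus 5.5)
The plan is to differentiate under the integral in \eqref{eq:risk}, where $v>0$ is fixed and the integrand is smooth in $(\nu,v)$ on the compact interval $s\in[0,\gamma]$.

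\textbf{Derivative in $\nu$.} We have $\partial_\nu\Phi((s-\nu)/\sqrt v)=-\varphi((s-\nu)/\sqrt v)/\sqrt v$. Substituting $u=(s-\nu)/\sqrt v$ turns the $s$-integral into $-\frac1\gamma\int_{-\nu/\sqrt v}^{(\gamma-\nu)/\sqrt v}\varphi(u)\,du=-\pi(\nu,v)/\gamma$. This is strictly negative, because the interval has positive length $\gamma/\sqrt v$ and $\varphi>0$.

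\textbf{Derivative in $v$.} We have $\partial_v\Phi((s-\nu)/\sqrt v)=-\frac{s-\nu}{2v^{3/2}}\varphi((s-\nu)/\sqrt v)$. After the same substitution this becomes $-\frac{1}{2\gamma\sqrt v}\int u\varphi(u)\,du$ over the same limits. Since $u\varphi(u)=-\varphi'(u)$, the integral equals $\varphi(\text{lower})-\varphi(\text{upper})$, and the sign flip gives $[\varphi((\gamma-\nu)/\sqrt v)-\varphi(-\nu/\sqrt v)]/(2\gamma\sqrt v)$. Evenness of $\varphi$ then yields the stated formula.

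\textbf{Sign of $\partial_vF_\gamma$.} Because $\varphi$ is even and decreasing in $|u|$, the inequality $\partial_vF_\gamma\ge0$ is equivalent to $|\gamma-\nu|\le|\nu|$. Squaring gives $\gamma^2-2\gamma\nu\le0$, that is, $\nu\ge\gamma/2$, using $\gamma>0$.

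\textbf{The threshold $F_\gamma<1/2$.} This is the only step needing an idea beyond calculus. I would use the symmetry $s\mapsto\gamma-s$ together with $\Phi(-x)=1-\Phi(x)$. At $\nu=\gamma/2$ this gives
\[
F_\gamma(\gamma/2,v)=\frac1\gamma\int_0^\gamma\Phi\Bigl(\frac{s-\gamma/2}{\sqrt v}\Bigr)ds=\frac1\gamma\int_0^\gamma\Bigl[1-\Phi\Bigl(\frac{s-\gamma/2}{\sqrt v}\Bigr)\Bigr]ds,
\]
so $F_\gamma(\gamma/2,v)=1/2$. Since $F_\gamma$ is strictly decreasing in $\nu$ by the first derivative, $F_\gamma<1/2$ iff $\nu>\gamma/2$.

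\textbf{The case $v=0$.} Here $F_\gamma(\nu,0)=\phi_\gamma(\nu)$, and $\phi_\gamma(\nu)<1/2$ iff $1-\nu/\gamma<1/2$ (the clipping at $0$ and $1$ is harmless), that is, iff $\nu>\gamma/2$.

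The only mild care needed is justifying differentiation under the integral sign. This is immediate because the integrand and its partial derivatives are continuous on $[0,\gamma]\times\mathbb R\times(0,\infty)$.
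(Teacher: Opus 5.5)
Your proposal is correct and takes essentially the same route as the paper. You differentiate the integral representation of $F_\gamma$, treat the variance derivative as an exact $s$-derivative of $\varphi$, and use the reflection $s\mapsto\gamma-s$ to get $F_\gamma(\gamma/2,v)=1/2$, combined with strict monotonicity in $\nu$; the sign comparison of $(\gamma-\nu)^2$ with $\nu^2$ and reading the $v=0$ case off the ramp definition also match the paper, and you additionally spell out the substitution and the justification for differentiating under the integral that the paper leaves implicit.
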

\end{samepage}
\begin{proof}
Differentiate \eqref{eq:risk}; the variance derivative is the integral of $(2\sqrt v)^{-1}\partial_s\varphi((s-\nu)/\sqrt v)$. Symmetry gives $F_\gamma(\gamma/2,v)=1/2$. The mean derivative is negative, and $(\gamma-\nu)^2\le\nu^2$ iff $\nu\ge\gamma/2$. The deterministic case follows from the ramp definition.
\end{proof}

Assume $w_0=(a_0,0)$ satisfies $R(w_0)\le1/2-2\Delta$. Such a witness exists for some $\Delta>0$ iff $B(m-\eps)>\gamma/2$: the aligned boundary vector maximizes the mean margin.

\begin{theorem}[Uniform Tail Deletion]\label{thm:tail}
Suppose $\eps>0$, $0<\lambda_-I\preceq\Lambda\preceq\lambda_+I$, and $0\le\tau\le\tau_{\max}<\infty$. Assume $\Delta>0$ and a witness $w_0=(a_0,0)\in\cW_B$ with $R(w_0)\le1/2-2\Delta$. Define
\begin{equation}\label{eq:tailconstants}
 \begin{aligned}
 v_{\max}&=B^2\max\{\lambda_+,\tau_{\max}^2\},\\
 \kappa&=\frac{\gamma}{\sqrt{2\pi v_{\max}}}
       \exp\left(-\frac{(m-\eps)^2}{2\lambda_-}\right),\\
 c_b&=\frac{\eps\kappa}{2\gamma B}.
 \end{aligned}
\end{equation}
Every $w=(a,b)\in\cW_B$ with $\cE(w)\le\Delta$ obeys
\[
 R(a,b)-R(a,0)\ge c_b\norm b^2,\qquad
 \norm b^2\le\cE(w)/c_b.
\]
All population minimizers have $b=0$, including when $\tau=0$.
\end{theorem}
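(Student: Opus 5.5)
The plan is to compare $w=(a,b)$ with its tail-deleted version $(a,0)$, which also lies in $\cW_B$. We write the risk difference through the two-variable function $F_\gamma$ of \eqref{eq:risk} and split it into a variance step and a mean step:
\[
R(a,b)-R(a,0)=\bigl[F_\gamma(\nu_w,v_w)-F_\gamma(\nu_w,v_{a})\bigr]+\bigl[F_\gamma(\nu_w,v_{a})-F_\gamma(\nu_{a},v_{a})\bigr].
\]
Here $\nu_a=a^\top\mu-\eps\norm a$ and $v_a=a^\top\Lambda a$ denote the margin mean and variance of $(a,0)$. The tail block changes the margin in two ways. It lowers the mean by $\eps(\norm w-\norm a)$, and it raises the variance by $\tau^2\norm b^2$. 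Lemma~\ref{lem:calculus} controls both effects, provided every mean along the way stays above $\gamma/2$.

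\textbf{Step 1: near-optimal points have large means.} Since $\cE(w)\le\Delta$ and $R^\star\le R(w_0)\le 1/2-2\Delta$, we get $R(w)\le 1/2-\Delta<1/2$. Lemma~\ref{lem:calculus}, which also covers $v=0$, then gives $\nu_w>\gamma/2$. Because $\nu_a\ge\nu_w$, every mean in $[\nu_w,\nu_a]$ exceeds $\gamma/2$. In particular $a\neq0$, since $a=0$ would give $\nu_a=0$. Hence $v_a\ge\lambda_-\norm a^2>0$. Moreover $\gamma/2<\nu_a\le\norm a(m-\eps)$, so $m>\eps$.

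\textbf{Step 2: the variance bracket is nonnegative.} Along $v\in[v_a,v_w]$ at fixed mean $\nu_w\ge\gamma/2$, Lemma~\ref{lem:calculus} gives $\partial_vF_\gamma\ge0$. So the first bracket is $\ge0$; when $\tau=0$ it vanishes identically.

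\textbf{Step 3: the mean bracket.} By Lemma~\ref{lem:calculus},
\[
F_\gamma(\nu_w,v_a)-F_\gamma(\nu_a,v_a)=\gamma^{-1}\int_{\nu_w}^{\nu_a}\pi(\nu,v_a)\,d\nu .
\]
The interval has length
\[
\nu_a-\nu_w=\eps(\norm w-\norm a)=\frac{\eps\norm b^2}{\norm w+\norm a}\ge\frac{\eps\norm b^2}{2B}.
\]
For a lower bound on $\pi$, write $\pi(\nu,v_a)=\int_0^\gamma v_a^{-1/2}\varphi((s-\nu)/\sqrt{v_a})\,ds$. For $s\in[0,\gamma]$ and $\nu\in[\gamma/2,\nu_a]$ we have $|s-\nu|\le\nu\le\nu_a$. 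Combined with Step~1 this gives
\[
\frac{(s-\nu)^2}{v_a}\le\frac{\norm a^2(m-\eps)^2}{\lambda_-\norm a^2}=\frac{(m-\eps)^2}{\lambda_-}.
\]
We also have $v_a\le\lambda_+B^2\le v_{\max}$. Together these yield $\pi\ge\kappa$ uniformly on the interval. Therefore the mean bracket is at least $\kappa\eps\norm b^2/(2B\gamma)=c_b\norm b^2$.

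\textbf{Conclusion.} Since $(a,0)\in\cW_B$, we have $R(a,0)\ge R^\star$, and so $\cE(w)\ge R(a,b)-R(a,0)\ge c_b\norm b^2$. A population minimizer has $\cE=0\le\Delta$, which forces $b=0$; the argument nowhere needed $\tau>0$.

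\textbf{Main obstacle.} The delicate part is the uniform lower bound on the active-window probability $\pi$. It needs a mean-to-standard-deviation ratio bounded independently of $w$ and $\tau$. The plan is to get this by bounding the ratio on the principal block alone and using $\nu\ge\gamma/2$ to control $|s-\nu|$ by $\nu$. Holding the variance at $v_a$ during the mean step is what makes the bound independent of $\tau$, including at $\tau=0$.
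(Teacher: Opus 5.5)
Your proof is correct and uses the same ingredients as the paper's. Near-optimality forces every mean above $\gamma/2$, Lemma~\ref{lem:calculus} makes the variance effect nonnegative, and $|s-\nu|\le\nu$ together with $v\ge\lambda_-\norm a^2$ gives $\pi\ge\kappa$ over the mean shift $\eps(\norm w-\norm a)\ge\eps\norm b^2/(2B)$. The only difference is the path. The paper differentiates along $u\mapsto(a,\sqrt u\,b)$, where mean and variance move together, so it must bound $\pi$ at variances up to $v_{\max}$ and therefore needs $\tau_{\max}$. Your split into a variance leg at mean $\nu_w$ and a mean leg at variance $v_a$ only evaluates $\pi$ at $v_a\le\lambda_+B^2$. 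It therefore gives the bound with $v_{\max}$ replaced by $\lambda_+B^2$ and needs no $\tau_{\max}$ hypothesis.
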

\begin{proof}
Near-optimality gives $R(w)\le1/2-\Delta$, hence $\nu_w>\gamma/2$. Put $q=\norm a>0$ and $\beta=\norm b$. Along $w_u=(a,\sqrt u\,b)$, $0\le u\le1$,
\[
 \nu_u=a^\top\mu-\eps\sqrt{q^2+u\beta^2},\quad
 v_u=a^\top\Lambda a+u\tau^2\beta^2.
\]
Thus $\gamma/2<\nu_u\le(m-\eps)q$ and $\lambda_-q^2\le v_u\le v_{\max}$. For $s\in[0,\gamma]$, $|s-\nu_u|\le\nu_u$, so $(s-\nu_u)^2/v_u\le(m-\eps)^2/\lambda_-$. The margin density on this interval is at least $\kappa/\gamma$, giving $\pi(\nu_u,v_u)\ge\kappa$. Therefore
\begin{align*}
 \frac{d}{du}R(w_u)
 &=\frac{\eps\beta^2\pi(\nu_u,v_u)}
         {2\gamma\sqrt{q^2+u\beta^2}}
   +\tau^2\beta^2\partial_vF_\gamma(\nu_u,v_u)\\
 &\ge c_b\beta^2.
\end{align*}
Integrate and use $R(a,0)\ge R^\star$. Applying the result at a minimizer gives $b=0$. At $\tau=0$, $v_u\ge\lambda_-q^2>0$, so the differentiation remains valid.
\end{proof}

The bound improves on $\norm b^2\le B^2$ only when $\cE<c_bB^2$; its constant is uniform, not sharp. At $\eps=0$, the same path still gives nonnegative deletion improvement below risk $1/2$, but it can vanish when $\tau=0$.

Tail deletion need not help outside the near-optimal region, even when a witness exists. For $r=k=1$, $\mu=2$, $\Lambda=1$, $\tau=0.15$, $\eps=0.25$ and $B=\gamma=1$, the witness condition holds, but $(-0.005,0)$ and $(-0.005,0.5)$ have risks $0.9999788271$ and $0.9989163667$. Deletion increases risk at these nonoptimal points.

\begin{proposition}[Normalization]\label{prop:norm}
Define $S_B(w)=Bw/\norm w$ for $w\ne0$, and $S_B(0)=Bv_0$ for a fixed unit vector. For $w\in\cW_B$, $\ell_{S_B(w)}\le\ell_w$ pointwise. The ball and sphere have equal empirical and population infima; normalizing a ball-valued approximate ERM preserves its tolerance.
\end{proposition}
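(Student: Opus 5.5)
The core of the argument is a pointwise inequality: for fixed $(x,y)$ and $w\in\cW_B$, replacing $w$ by $S_B(w)$ does not decrease the adversarial margin in the direction that matters for the ramp loss. Since $\phi_\gamma$ is nonincreasing, this is enough.

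\emph{Case $w\ne0$.} Write $w=q\theta$ with $q=\norm w\in(0,B]$ and $\norm\theta=1$. Then
\[
 M_w(x,y)=q\,(y\theta^\top x-\eps)=q\,t,\qquad M_{S_B(w)}(x,y)=B\,t,
\]
where $t=y\theta^\top x-\eps$.
\begin{itemize}
\item If $t\ge0$, then $Bt\ge qt$, so $\phi_\gamma(Bt)\le\phi_\gamma(qt)$ because $\phi_\gamma$ is nonincreasing.
\item If $t<0$, then both $qt<0$ and $Bt<0$, so both margins lie where $\phi_\gamma=1$, and the two losses are equal.
\end{itemize}
This sign split is the only delicate point. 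Scaling up a negative margin makes it more negative, but the ramp is already saturated at $1$ there, so nothing is lost.

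\emph{Case $w=0$.} Here $M_0\equiv0$, so $\ell_0\equiv\phi_\gamma(0)=1$, which is the maximal loss value. Hence $\ell_{S_B(0)}\le1=\ell_0$ for any choice of the unit vector $v_0$.

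Integrating the pointwise inequality against $P_n$ and against $P$ gives $R_n(S_B(w))\le R_n(w)$ and $R(S_B(w))\le R(w)$. Since $S_B(w)$ lies on the sphere $\{\norm w=B\}\subset\cW_B$, the sphere infimum is at most the ball infimum. The reverse inequality is trivial because the sphere is a subset of the ball. So the two infima coincide, both empirically and in population.

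Finally, suppose $\hat w\in\cW_B$ satisfies $R_n(\hat w)\le\inf_{\cW_B}R_n+\delta$. Then
\[
 R_n(S_B(\hat w))\le R_n(\hat w)\le\inf_{\text{sphere}}R_n+\delta,
\]
using the equality of infima just shown. So the normalized estimator is a $\delta$-approximate ERM over both the ball and the sphere.

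No step here is a real obstacle. The only things that need care are the sign split above and checking that $\phi_\gamma(0)=1$, so that the degenerate point $w=0$ is covered.
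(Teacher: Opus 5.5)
Your proof is correct and follows the paper's argument. The paper writes $M_{cw}=cM_w$ with $c=B/\norm w\ge1$ and asserts $\phi_\gamma(ct)\le\phi_\gamma(t)$, which your sign split verifies explicitly; it then handles $w=0$ via $\ell_0\equiv1$ and obtains equal infima and preserved tolerance from sphere inclusion, as you do.
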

\begin{proof}
For nonzero $w\in\cW_B$, $c=B/\norm w\ge1$, so $M_{cw}=cM_w$ and $\phi_\gamma(ct)\le\phi_\gamma(t)$. At $w=0$ the original loss is one. Sphere inclusion gives equality of infima.
\end{proof}
On the sphere the attack is the common shift $-\eps B$. It still changes which margins are active.

\section{Exact Local Efficiency}
From now on, fix $r,k$ and all model parameters, with
\begin{equation}\label{eq:isotropic}
 \Lambda=\sigma^2I_r,\quad \mu=mu_\star,\quad
 \norm{u_\star}=1,\quad B(m-\eps)>\gamma/2.
\end{equation}
Here $\sigma>0$ and $\eps,\tau\ge0$, including ordinary training. Take an orthonormal matrix $Q$ spanning $u_\star^\perp$. In analytical coordinates,
\[
 U=u_\star^\top YZ,\quad T=Q^\top YZ,\quad V=Y\tau G
\]
are independent with laws $N(m,\sigma^2)$, $N(0,\sigma^2I_{r-1})$, and $N(0,\tau^2I_k)$. When $r=1$, the $T$ block is absent. Define
\begin{equation}\label{eq:window}
 \begin{aligned}
 A&=\mathbf1\{\eps<U<\eps+\gamma/B\},\quad p=\PP(A=1),\\
 D&=\varphi((\eps+\gamma/B-m)/\sigma)
          -\varphi((\eps-m)/\sigma).
 \end{aligned}
\end{equation}
The witness implies $0<p<1$ and $D>0$.

\begin{lemma}[Population Curvature]\label{lem:curvature}
The unique sphere minimizer is $\ws=(Bu_\star,0)$. In the chart
\[
 w(t,b)=\left(\sqrt{B^2-\norm t^2-\norm b^2}\,u_\star+Qt,b\right)
\]
with $\theta=(t,b)$ near zero,
\begin{equation}\label{eq:quadratic}
 \begin{gathered}
 \cE(w(\theta))=\tfrac12\theta^\top H\theta+O(\norm\theta^4),
 \\ H=\diag(h_zI_{r-1},h_bI_k),
 \end{gathered}
\end{equation}
where the positive eigenvalues are
\begin{equation}\label{eq:hessian}
 h_z=\frac{mp}{\gamma B},\qquad
 h_b=\frac{mp+(\tau^2-\sigma^2)D/\sigma}{\gamma B}.
\end{equation}
\end{lemma}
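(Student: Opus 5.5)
The plan is to reduce the sphere risk to a function of two scalars and read off both the minimizer and the Hessian from that. On the sphere $\norm a^2+\norm b^2=B^2$. Write $a=s\,u_\star+Qt$ with $s=\sqrt{B^2-\norm t^2-\norm b^2}$. Then $\nu_w=ms-\eps B$ and $v_w=\sigma^2(B^2-\norm b^2)+\tau^2\norm b^2$, so by \eqref{eq:risk}
\[
 R(w(\theta))=g(\norm t^2,\norm b^2),\quad
 g(x,y)=F_\gamma\bigl(m\sqrt{B^2-x-y}-\eps B,\;\sigma^2B^2+(\tau^2-\sigma^2)y\bigr).
\]
Because $v\ge \min\{\sigma^2,\tau^2\}B^2+\ldots>0$ near $\theta=0$ (indeed $v\ge\sigma^2(B^2-y)>0$), $g$ is smooth near $(0,0)$ by Lemma~\ref{lem:calculus}.

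\textbf{Quadratic expansion.} Taylor expanding $g$ to first order in $(x,y)=(\norm t^2,\norm b^2)$ gives $\cE(w(\theta))=\partial_xg\,\norm t^2+\partial_yg\,\norm b^2+O((\norm t^2+\norm b^2)^2)$. The remainder is $O(\norm\theta^4)$, which is the claimed error order, so $h_z=2\partial_xg(0,0)$ and $h_b=2\partial_yg(0,0)$. Evaluate at $\nu^\star=B(m-\eps)$ and $v^\star=\sigma^2B^2$ using \eqref{eq:derivatives}:
\begin{itemize}
\item The standardized window endpoints become $(\gamma-\nu^\star)/(\sigma B)=(\eps+\gamma/B-m)/\sigma$ and $-\nu^\star/(\sigma B)=(\eps-m)/\sigma$, so $\pi(\nu^\star,v^\star)=p$.
\item Evenness of $\varphi$ gives $\partial_vF_\gamma=D/(2\gamma\sigma B)$.
\item Since $\partial_x\nu=\partial_y\nu=-m/(2B)$, we get $\partial_xg=mp/(2\gamma B)$.
\item For $y$ we get $\partial_yg=mp/(2\gamma B)+(\tau^2-\sigma^2)D/(2\gamma\sigma B)$.
\end{itemize}
This yields \eqref{eq:hessian}.

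\textbf{Positivity.} Clearly $h_z>0$ since $p>0$. For $h_b$, I use $\tau^2\ge0$ and the Gaussian identity $\int_\eps^{\eps+\gamma/B}u\,\varphi((u-m)/\sigma)\,du/\sigma=mp-\sigma D$, whose derivation is that $-\sigma\varphi((u-m)/\sigma)$ has derivative $(u-m)\varphi/\sigma$. This gives $\gamma B\,h_b\ge mp-\sigma D=\EE[UA]>0$, because the window lies in $u>\eps\ge0$ and has positive probability.

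\textbf{Global uniqueness (main obstacle).} First fix $\norm a=q$. Then $v_w$ is fixed and $\nu_w=a^\top\mu-\eps B$ is uniquely maximized by $a=q u_\star$. By $\partial_\nu F_\gamma<0$, any minimizer has $t=0$ and $a=\sqrt{B^2-y}\,u_\star$ with $y=\norm b^2$. It remains to show that $y\mapsto g(0,y)$ on $[0,B^2]$ is uniquely minimized at $y=0$.
\begin{itemize}
\item \emph{Case $\tau\ge\sigma$.} Any minimizer has risk $<1/2$ by the witness, hence $\nu>\gamma/2$ by Lemma~\ref{lem:calculus}. There $\partial_vF_\gamma\ge0$, and $\partial_yg>0$ follows from the $\nu$-term.
\item \emph{Case $\tau<\sigma$, the delicate one.} I plan to rewrite $\partial_yg$ as a window expectation. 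At $(0,y)$ the margin is $\sqrt{B^2-y}\,U+\langle b,V\rangle-\eps B$, and the derivative in $y$ is a Gaussian integration by parts of $-\gamma^{-1}\mathbf 1\{\text{margin}\in(0,\gamma)\}$ against $\partial_y(\text{margin})$. I expect this to give $\partial_yg=\EE[\tilde U\,\mathbf1\{\text{window}\}]/(2\gamma\sqrt{B^2-y})+\tau^2(\cdots)$, with the first part positive exactly as in the $h_b$ computation, since the active window has $\tilde U>\eps\ge0$.
\item If this identity resists, the fallback is a restricted argument. Combine the local positivity of $H$ with the fact that minimizers lie in $\{\nu>\gamma/2\}$. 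For $\eps>0$, invoke Theorem~\ref{thm:tail} directly to force $b=0$. For $\eps=0$, check the sign of the one-variable derivative on that region explicitly.
\end{itemize}
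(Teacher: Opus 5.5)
Your chart reduction, the first-order Taylor step in $(\norm t^2,\norm b^2)$, the evaluations $\pi=p$ and $\partial_vF_\gamma=D/(2\gamma\sigma B)$, and the alignment step are correct and match the paper. So are your case $\tau\ge\sigma$ and the positivity argument via $\gamma Bh_b=\EE[UA]+\tau^2D/\sigma$; the latter silently uses $D>0$, which follows from $B(m-\eps)>\gamma/2$.

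The gap is global uniqueness for $\tau<\sigma$, the nearly low-rank regime the paper is about. On the sphere, $\partial_yg(0,y)=m\pi/(2\gamma\sqrt{B^2-y})+(\tau^2-\sigma^2)\partial_vF_\gamma$. The second term is negative precisely where $\nu>\gamma/2$, which is where any minimizer must lie. Carrying out your integration by parts gives $\partial_yg=\EE[U\mathbf1\{0<M<\gamma\}]/(2\gamma s)+\tau^2\partial_vF_\gamma$, with $s=\sqrt{B^2-y}$, $M=sU+\sqrt y\,W-\eps B$ and $W=\langle b,V\rangle/\norm b\sim N(0,\tau^2)$. For $y>0$ and $\tau>0$ this tilted window does not force $U>\eps$, not even $U>0$. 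So positivity does not follow ``exactly as in the $h_b$ computation,'' and you never establish the sign of the $\tau^2$ term. Your fallback handles $\eps>0$ through Theorem~\ref{thm:tail}, together with Proposition~\ref{prop:norm} to pass from ball to sphere minimizers. But for $\eps=0$, which the lemma explicitly covers, ``check the sign explicitly'' is exactly the inequality that remains unproved.

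The missing idea is to leave the sphere. For a competitor with risk below $1/2$, the paper first deletes $b$ at fixed $a$. By Lemma~\ref{lem:calculus} this weakly improves risk even at $\eps=0$, since $\nu$ rises and $v$ falls by $\tau^2\norm b^2$ where $\partial_vF_\gamma\ge0$. It then aligns $a$ and rescales $(qu_\star,0)$ to $(Bu_\star,0)$. Because the margin there is $q(U-\eps)$, one has $-\frac{d}{dq}R(qu_\star,0)=\gamma^{-1}\EE[(U-\eps)\mathbf1\{0<q(U-\eps)<\gamma\}]>0$. Since $b\ne0$ on the sphere forces $q<B$, strictness comes from scaling rather than deletion, uniformly in $\eps\ge0$ and $\tau\ge0$.

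Alternatively, your on-sphere route can be completed. Substitute $sU=M+\eps B-\sqrt y\,W$ and use Stein's identity $\EE[W\mathbf1\{0<M<\gamma\}]=\tau^2\sqrt y\{f_M(0)-f_M(\gamma)\}$, where $f_M$ is the $N(\nu,v)$ density of $M$. This gives $2\gamma(B^2-y)\,\partial_yg=\EE[(M+\eps B)\mathbf1\{0<M<\gamma\}]+\tau^2B^2\{f_M(\gamma)-f_M(0)\}$. When $\nu>\gamma/2$ the first term is positive and the second is nonnegative. This realizes your $\tilde U$ as $(M+\eps B)/\sqrt{B^2-y}$.
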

\begin{proof}
A competitor with risk at least $1/2$ cannot be optimal. Below $1/2$, deletion cannot increase risk by Lemma~\ref{lem:calculus}, also for $\eps=0$. Alignment at fixed principal norm $q>0$ strictly increases the mean while preserving variance. Along the aligned direction $g(q)=R(qu_\star,0)$,
\[
 -g'(q)=\frac1{\gamma\sigma}\int_0^{\gamma/q}
 z\varphi((z-(m-\eps))/\sigma)\,dz>0.
\]
If $b\ne0$ on the sphere, $q<B$, so alignment followed by scaling strictly improves risk. This proves uniqueness even if deletion itself is an equality.

In the chart the exact mean and variance are
\[
 \nu=m\sqrt{B^2-\norm\theta^2}-\eps B,\qquad
 v=\sigma^2B^2+(\tau^2-\sigma^2)\norm b^2.
\]
At $\theta=0$, $F_\nu=-p/\gamma$ and $F_v=D/(2\gamma\sigma B)$. Expansion gives \eqref{eq:quadratic}--\eqref{eq:hessian}. Finally,
\[
 \gamma B h_b=mp-\sigma D+\tau^2D/\sigma
             =\EE[UA]+\tau^2D/\sigma>0,
\]
since $U>\eps\ge0$ on a positive-probability interval.
\end{proof}

The sample score at zero exists almost surely, because $B(U-\eps)$ has no mass at either kink:
\begin{equation}\label{eq:score}
 \begin{gathered}
 \psi=-\gamma^{-1}A(T,V),\\
 \EE\psi\psi^\top=\frac p{\gamma^2}
          \diag(\sigma^2I_{r-1},\tau^2I_k).
 \end{gathered}
\end{equation}
Tail displacement shortens the signal component while replacing principal variance by tail variance. Their effects combine in $h_b$, which differs from the uniform deletion constant $c_b$.

Let a measurable sphere estimator satisfy
\begin{equation}\label{eq:erm}
 R_n(\widehat w_n)\le\inf_{\norm w=B}R_n(w)+\rho_n,
 \qquad \rho_n\ge0.
\end{equation}
The same condition follows by normalizing a ball estimator with that tolerance.

\begin{theorem}[Asymptotic Linear Representation]\label{thm:efficiency}
Under \eqref{eq:isotropic} and the approximate \emph{global} ERM condition \eqref{eq:erm}, if $n\rho_n=o_p(1)$, the coordinates $\widehat t_n=Q^\top\widehat a_n,\widehat b_n$ satisfy
\begin{equation}\label{eq:linear}
 \begin{aligned}
 \widehat t_n&=\frac1{n\gamma h_z}\sum_iA_iT_i+o_p(n^{-1/2}),\\
 \widehat b_n&=\frac1{n\gamma h_b}\sum_iA_iV_i+o_p(n^{-1/2}).
 \end{aligned}
\end{equation}
Consequently $\sqrt n(\widehat t_n,\widehat b_n)$ converges to a centered Gaussian with covariance $\diag(v_zI_{r-1},v_bI_k)$, where
\begin{equation}\label{eq:variances}
 v_z=\frac{\sigma^2p}{\gamma^2h_z^2},\qquad
 v_b=\frac{\tau^2p}{\gamma^2h_b^2}.
\end{equation}
If, additionally, for some $\delta>0$,
\begin{equation}\label{eq:moment}
 \EE(n\rho_n)^{1+\delta}\longrightarrow0,
\end{equation}
then
\begin{equation}\label{eq:expected}
 \begin{aligned}
 n\EE\norm{\widehat b_n}^2&\to kv_b,\\
 n\EE\norm{\widehat t_n}^2&\to(r-1)v_z,\\
 n\EE\cE(\widehat w_n)&\to
 \tfrac12\{(r-1)h_zv_z+kh_bv_b\}.
 \end{aligned}
\end{equation}
Exact ERM and deterministic $\rho_n=o(1/n)$ satisfy the moment condition.
\end{theorem}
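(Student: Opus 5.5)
This follows the standard M-estimation route for nonsmooth objectives: consistency, a rate, and then an argmin/Bahadur representation. We work on the sphere in the chart $\theta=(t,b)$ of Lemma~\ref{lem:curvature}. The representation depends on the chart being nonsingular near $\ws$.

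\textbf{Step 1: consistency.} The class $\{\ell_w:\norm w=B\}$ is a composition of a monotone bounded function with a VC class of linear functionals, so it is Glivenko--Cantelli. Hence $\sup|R_n-R|\to0$ almost surely. Lemma~\ref{lem:curvature} gives a unique sphere minimizer, and $R$ is continuous on the compact sphere (it is continuous even where $v_w=0$, since the $a$-block always has $q>0$ near $\ws$, and compactness handles the rest). Together with \eqref{eq:erm} and $\rho_n=o_p(1)$, this gives $\widehat w_n\to\ws$ in probability. In particular $\widehat w_n$ eventually lies in the chart, and $\widehat t_n=Q^\top\widehat a_n$ and $\widehat b_n$ are its chart coordinates.

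\textbf{Step 2: rate and local expansion.} Define the remainder
\[
r_\theta=\ell_{w(\theta)}-\ell_{\ws}-\theta^\top\psi .
\]
The key estimate is a bracketing/modulus bound for $r_\theta$. Differences $\ell_{w(\theta)}-\ell_{\ws}$ are Lipschitz in the margin except that the slope $-1/\gamma$ switches on the window. On the window, $M_{w(\theta)}-M_{\ws}=\theta^\top(T,V)\cdot(\text{sign})+O(\norm\theta^2)$ exactly to first order, because the attack term is the constant $-\eps B$ on the sphere. The event that the window indicator differs between $w(\theta)$ and $\ws$ has probability $O(\norm\theta)$, since $U$ has a bounded density at the kinks $\eps$ and $\eps+\gamma/B$. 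This yields
\[
P r_\theta^2=O(\norm\theta^3).
\]
A maximal inequality for VC-type classes then gives
\[
\sup_{\norm\theta\le\delta}|(P_n-P)r_\theta|=O_p(n^{-1/2}\delta^{3/2}).
\]
Combining this with \eqref{eq:quadratic} and the positivity of $H$, the standard rate theorem (van der Vaart--Wellner Thm~3.2.5) gives $\norm{\widehat\theta_n}=O_p(n^{-1/2})$. On $\norm\theta\le Kn^{-1/2}$ we then obtain the uniform expansion
\[
n\{R_n(w(\theta))-R_n(\ws)\}=\tfrac n2\theta^\top H\theta+n\theta^\top P_n\psi+o_p(1),
\]
using $P\psi=0$. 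Here the quadratic term comes from $\cE$ and the empirical fluctuation of $r_\theta$ is $o_p(n^{-1})$. Because $n\rho_n=o_p(1)$, $\widehat\theta_n$ is an $o_p(n^{-1})$-approximate minimizer of this quadratic. The argmin argument with a strictly convex quadratic then gives
\[
\widehat\theta_n=-H^{-1}P_n\psi+o_p(n^{-1/2}).
\]
Substituting \eqref{eq:score} and \eqref{eq:hessian} yields \eqref{eq:linear}. The CLT with $\EE\psi\psi^\top$ from \eqref{eq:score} gives the covariances \eqref{eq:variances}, namely $H^{-1}\EE\psi\psi^\top H^{-1}$.

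\textbf{Step 3: expectations.} Convergence in distribution gives $n\norm{\widehat b_n}^2\Rightarrow\norm{N(0,v_bI_k)}^2$, with mean $kv_b$. Likewise $n\cE(\widehat w_n)\Rightarrow\frac12\theta^\top H\theta$ by \eqref{eq:quadratic}, with the stated mean. To upgrade these to convergence of expectations, I would prove uniform integrability of $n\norm{\widehat\theta_n}^2$, for instance a bound on $\EE(n\norm{\widehat\theta_n}^2)^{1+\delta'}$. I would use a peeling/exponential-tail version of Step 2:
\[
\PP\bigl(\norm{\widehat\theta_n}>2^jn^{-1/2}\bigr)\le\PP\bigl(n\rho_n\gtrsim 4^j\bigr)+Ce^{-c4^j}.
\]
The empirical part follows from Talagrand/Bousquet concentration for the bounded class. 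Far from $\ws$ I would use Theorem~\ref{thm:tail} (or a global quadratic-growth bound $\cE\ge c\norm{\theta}^2$ on the sphere, obtained by compactness plus local curvature) to convert excess risk into distance. The term $\PP(n\rho_n\gtrsim4^j)$ is summable against moments by Markov and \eqref{eq:moment}. Since $\cE\le C\norm\theta^2$ globally, uniform integrability of $n\cE$ follows. For exact ERM $\rho_n=0$, and deterministic $\rho_n=o(1/n)$ trivially satisfies \eqref{eq:moment}.

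The main obstacle is Step 3's exponential peeling bound, which must hold uniformly beyond the local chart, together with the sharp modulus $P r_\theta^2=O(\norm\theta^3)$. The modulus requires care about the no-mass-at-kinks property. I would handle it by conditioning on $(T,V)$ and using the bounded density of $U$.
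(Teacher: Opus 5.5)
Your proposal is correct and shares the paper's skeleton: a root-$n$ rate, uniform control of the kink remainder on $n^{-1/2}$-balls, comparison with the quadratic minimizer $-H^{-1}P_n\psi$, and uniform integrability from a peeled exponential tail plus \eqref{eq:moment}. Two ingredients differ.

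For the remainder, you prove a quantitative modulus $Pr_\theta^2=O(\norm\theta^3)$ by conditioning on $(T,V)$ and using the bounded density of $U$ near $\eps$ and $\eps+\gamma/B$, and then apply a local maximal inequality for VC-type classes. The paper instead uses only almost-sure differentiability at zero (so $\EE f_{n,h}^2\to0$ by dominated convergence), the envelope and random Lipschitz bounds \eqref{eq:envelope}, and $q$-th moment chaining on dyadic grids with even $q>r-1+k$. Your route yields an explicit polynomial order for the remainder, as in hinge-loss Bahadur representations, at the cost of the density computation and entropy machinery. The paper's argument is softer and only needs $o_p(1)$. Strictly, the maximal inequality gives $\delta^{3/2}\sqrt{\log(1/\delta)}$ plus an envelope term of order $\delta n^{-1/2}$ up to logarithms, not $\delta^{3/2}$ alone; this is harmless at $\delta\asymp n^{-1/2}$.

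For the rate, you use Glivenko--Cantelli consistency and van der Vaart--Wellner's Theorem~3.2.5, then a separate peeling argument for moments. The paper obtains consistency, the rate and the tail bound \eqref{eq:risktail} at once, from \eqref{eq:globalgrowth}, the Bernstein condition $\EE h_w^2\le V\cE(w)$, and local Rademacher concentration over excess-risk shells. So once your Step~3 tail bound is proved, Step~1 and the appeal to Theorem~3.2.5 become redundant. In that step, use the global quadratic growth rather than Theorem~\ref{thm:tail}, which needs $\eps>0$ and bounds only $\norm b$, not $\norm t$.
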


For $\tau>0$, the theorem also gives $n\norm{\widehat b_n}^2/v_b\Rightarrow\chi_k^2$. Neither this law nor its expectation approximation is a finite-sample confidence interval.

\section{Proof of Theorem \ref{thm:efficiency}}
We establish a root-$n$ rate, control the nonsmooth remainder uniformly, and then compare quadratic minimizers.

\paragraph{Rate and Moment Control.}
In the chart, $\norm{w(\theta)-\ws}^2=\norm\theta^2+O(\norm\theta^4)$. Lemma~\ref{lem:curvature} therefore gives positive local quadratic growth. Continuity and uniqueness on the remaining compact sphere give a constant $c_0>0$ such that
\begin{equation}\label{eq:globalgrowth}
 \cE(w)\ge c_0\norm{w-\ws}^2\quad(\norm w=B).
\end{equation}
This constant is for fixed model parameters; no uniformity in growing dimension is asserted. Let $h_w=\ell_w-\ell_{\ws}$ and $\Gamma=\EE[XX^\top]$. Since both norms equal $B$,
\[
 |h_w|\le\gamma^{-1}|Y(w-\ws)^\top X|.
\]
Contraction and \eqref{eq:globalgrowth} give
\[
 \mathfrak R_n\{h_w:\cE(w)\le s\}\le K\sqrt{s/n},
 \qquad \EE h_w^2\le V\cE(w),
\]
where $K^2=\tr\Gamma/(\gamma^2c_0)$ and $V=\|\Gamma\|_{\mathrm{op}}/(\gamma^2c_0)$. Here $\mathfrak R_n$ is the expected signed supremum without an outer absolute value. For independent Rademacher signs $\eta_i$, the local difference set lies in the ball of radius $\sqrt{s/c_0}$, whose linear supremum is $\sqrt{s/c_0}\|\sum_i\eta_iY_iX_i\|/n$. Jensen gives the stated bound.

Theorem 2.1 of \cite{bartlett2005}, with tuning parameter one and range $[-1,1]$, gives, except on an event of probability $e^{-x}$,
\begin{equation}\label{eq:concentration}
 \sup_{\cE(w)\le s}(P-P_n)h_w
 \le4K\sqrt{s/n}+\sqrt{2Vsx/n}+\frac{8x}{3n}.
\end{equation}
For $s>1$, cap the class at risk one; the bound remains valid. Put $\alpha=\min\{(512V)^{-1},3/128\}$. On the shell $s_j/2<\cE(w)\le s_j$, $s_j=2^{j+1}u/n$, use $x_j=\alpha ns_j$. If $u\ge2048K^2$, the right side of \eqref{eq:concentration} is at most $3s_j/16$. On $n\rho_n\le u/4$, an output in this shell would instead have
\[
 (P-P_n)h_{\widehat w_n}\ge\cE(\widehat w_n)-\rho_n>3s_j/8.
\]
A union over $j\ge0$ therefore gives, for $u\ge u_0:=\max\{2048K^2,\log(2)/(2\alpha),1\}$,
\begin{equation}\label{eq:risktail}
 \PP(n\cE(\widehat w_n)>u)
 \le2e^{-2\alpha u}+\PP(n\rho_n>u/4).
\end{equation}
Thus $\widehat w_n-\ws=O_p(n^{-1/2})$. Under \eqref{eq:moment}, integration gives a bounded $(1+\delta)$-moment of $n\cE(\widehat w_n)$, and hence uniform integrability of $n\norm{\widehat w_n-\ws}^2$. This concentration argument is only used for the rate and moments; its constants need not be numerically sharp.

\paragraph{Kink Remainder.}
Write $r_\theta=\ell_{w(\theta)}-\ell_{\ws}-\psi^\top\theta$ and $f_{n,h}=\sqrt n\,r_{h/\sqrt n}$. For fixed $h$, almost-sure differentiability at zero gives $f_{n,h}\to0$. On any fixed compact $h$-cube and all sufficiently large $n$, the ramp is Lipschitz and the chart derivative is bounded, so
\begin{equation}\label{eq:envelope}
 |f_{n,h}|\le C_M\norm X,\qquad
 |f_{n,h}-f_{n,g}|\le L(X,Y)\norm{h-g},
\end{equation}
where $L$ has every finite moment. The constants include $\gamma^{-1}$. These bounds hold across kink crossings.

For completeness, let $\mathbb G_n=\sqrt n(P_n-P)$ and choose an even integer $q>d:=r-1+k$. Expansion of a centered iid sum gives
\[
 \EE|\mathbb G_n(f_{n,h}-f_{n,g})|^q
 \le C_q\norm{h-g}^q.
\]
Terms containing a sample index once vanish; the others use at most $q/2$ distinct indices and are bounded using $\EE L^q$. A dyadic grid at level $j$ has $O(2^{jd})$ parent-child increments of length $O(2^{-j})$. Their expected maximum is at most $C2^{-j(1-d/q)}$, with a summable tail uniformly in $n$. On each fixed finite grid, dominated convergence in \eqref{eq:envelope} gives $\EE f_{n,h}^2\to0$, so $\mathbb G_nf_{n,h}\to0$ in $L^2$. The grids and continuity yield
\[
 \sup_{\norm h\le M}|n(P_n-P)r_{h/\sqrt n}|=o_p(1).
\]
Combining with \eqref{eq:quadratic}, uniformly on compact sets,
\begin{equation}\label{eq:lan}
 \begin{aligned}
 nP_n(\ell_{w(h/\sqrt n)}-\ell_{\ws})
 &=h^\top Z_n+\tfrac12h^\top Hh+o_p(1),\\
 Z_n&=n^{-1/2}\sum_i\psi_i.
 \end{aligned}
\end{equation}

\paragraph{Argmin and Expectations.}
The rate puts $\widehat w_n$ in the chart with probability tending to one and gives $\sqrt n\widehat\theta_n=O_p(1)$. The quadratic minimizer $h_n^0=-H^{-1}Z_n$ is also $O_p(1)$ by the iid CLT. Comparing \eqref{eq:erm} with $w(h_n^0/\sqrt n)$ in \eqref{eq:lan} on a fixed compact set gives
\[
 \tfrac12\norm{H^{1/2}(\sqrt n\widehat\theta_n-h_n^0)}^2
 \le n\rho_n+o_p(1).
\]
Positive definiteness proves \eqref{eq:linear}, and the score covariance \eqref{eq:score} proves \eqref{eq:variances}.

To see the moment step explicitly, put $W_n=n\cE(\widehat w_n)$ and integrate \eqref{eq:risktail} against $(1+\delta)u^\delta du$. This gives
\[
 \sup_n\EE W_n^{1+\delta}<\infty
\]
under \eqref{eq:moment}; the exponential term is integrable and the tolerance term contributes at most $4^{1+\delta}\EE(n\rho_n)^{1+\delta}$. Finitely many initial $n$ are harmless because $W_n\le n$. By \eqref{eq:globalgrowth}, $n\norm{\widehat w_n-\ws}^2\le W_n/c_0$, so its uniform integrability allows convergence of the second moments in \eqref{eq:variances}.

Locally, the Taylor remainder multiplied by $n$ is bounded by $C n\norm{\widehat w_n-\ws}^2\norm{\widehat w_n-\ws}^2$. The second factor is bounded and vanishes in probability; the first is uniformly integrable. Off a fixed chart neighborhood, $\cE\ge e_0>0$ and
\[
 n\PP(\text{off chart})\le e_0^{-1}\EE[W_n\mathbf1\{W_n\ge ne_0\}]\to0.
\]
Bounded risk and parameter distance then control all off-chart contributions. This proves \eqref{eq:expected} without inferring expectation convergence from weak convergence alone.

The moment assumption cannot simply be omitted. Replace an exact ERM by a fixed pure-tail vector with independent probability $1/n$. Then \eqref{eq:erm} holds with $\rho_n$ equal to the replacement indicator, so $n\rho_n=o_p(1)$ and the limiting distribution is unchanged. Yet the limit of $n\EE\norm{\widehat b_n}^2$ increases by $B^2$.

\section{Efficiency Separation and Classification}
Define the full-input signed mean and its normalization by
\[
 \bar s_n=n^{-1}\sum_iY_iX_i,\qquad
 \widehat w_{\mathrm{mean}}=S_B(\bar s_n).
\]
The delta method at $(mu_\star,0)$ gives
\begin{equation}\label{eq:mean}
 \begin{aligned}
 \widehat t_{\mathrm{mean}}&=\frac B{mn}\sum_iT_i+o_p(n^{-1/2}),\\
 \widehat b_{\mathrm{mean}}&=\frac B{mn}\sum_iV_i+o_p(n^{-1/2}).
 \end{aligned}
\end{equation}
Its tangent variances are $v_z^M=B^2\sigma^2/m^2$ and $v_b^M=B^2\tau^2/m^2$. Indeed, the signed sample mean is exactly $(\mu+\zeta_z,\zeta_b)$, with independent $\zeta_z\sim N(0,\sigma^2I_r/n)$ and $\zeta_b\sim N(0,\tau^2I_k/n)$. Thus, almost surely,
\[
 \norm{\widehat b_{\mathrm{mean}}}^2
 =\frac{B^2\norm{\zeta_b}^2}{\norm{\mu+\zeta_z}^2+\norm{\zeta_b}^2}.
\]
Within distance $m/2$ of the nonzero population mean, normalization is Lipschitz and scaled errors have bounded Gaussian moments. Outside that event, probability decays exponentially and normalized distance is at most $2B$. This proves uniform integrability and the corresponding expectation limits. The principal subspace is not supplied to this estimator.

\begin{corollary}[Variance Inflation]\label{cor:inflation}
Under Theorem~\ref{thm:efficiency}, if $0<\tau\le\sigma$, the tail asymptotic variance ratio is
\begin{equation}\label{eq:inflation}
 I_b=\frac{v_b}{v_b^M}
 =\frac{m^2p}{\gamma^2B^2h_b^2}
 =\frac1p\left(\frac{h_z}{h_b}\right)^2
 \ge\frac1p>1.
\end{equation}
Each principal tangent direction has ratio $I_z=1/p$.
\end{corollary}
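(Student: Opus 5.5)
The corollary is a direct computation from the variances in Theorem~\ref{thm:efficiency}, \eqref{eq:variances}, and the signed-mean variances following \eqref{eq:mean}. The plan has three steps: form the ratios, simplify them with the explicit eigenvalues of Lemma~\ref{lem:curvature}, and then prove the inequality $h_b\le h_z$ under $\tau\le\sigma$.

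\emph{Step 1: the ratios.} For the tail block, dividing $v_b=\tau^2p/(\gamma^2h_b^2)$ by $v_b^M=B^2\tau^2/m^2$ gives
\[
 I_b=\frac{m^2p}{\gamma^2B^2h_b^2}.
\]
The factor $\tau^2$ cancels because $\tau>0$. For the principal block, the same computation gives
\[
 I_z=\frac{m^2p}{\gamma^2B^2h_z^2}.
\]
Substituting $h_z=mp/(\gamma B)$ from \eqref{eq:hessian} gives $\gamma^2B^2h_z^2=m^2p^2$, so $I_z=1/p$. Writing $m^2p=p^{-1}(\gamma Bh_z)^2$ in $I_b$ then yields the factorization
\[
 I_b=\frac1p\left(\frac{h_z}{h_b}\right)^2 .
\]

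\emph{Step 2: the inequality.} From \eqref{eq:hessian},
\[
 \gamma B(h_z-h_b)=(\sigma^2-\tau^2)\frac{D}{\sigma}.
\]
Lemma~\ref{lem:curvature} gives $h_b>0$, and the witness gives $D>0$. Hence $\tau\le\sigma$ implies $0<h_b\le h_z$, so $(h_z/h_b)^2\ge1$. Finally, $p<1$ (noted after \eqref{eq:window}) gives $1/p>1$.

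\emph{Main difficulty.} The only nonroutine point is the sign of $D$ together with the positivity of $h_b$, and the paper already states both. Step 2 relies on $D>0$, which the paper asserts as a consequence of the witness. If I had to verify it, I would note that $D>0$ is equivalent to $|\eps+\gamma/B-m|<|\eps-m|$. Under $m-\eps>\gamma/(2B)$, this says that $m$ lies past the midpoint of the window $(\eps,\eps+\gamma/B)$, which is exactly the witness condition $B(m-\eps)>\gamma/2$.
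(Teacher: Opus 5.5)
Your proposal is correct and follows the paper's own proof: both use $D>0$ and $\tau^2\le\sigma^2$ in \eqref{eq:hessian} to get $0<h_b\le h_z=mp/(\gamma B)$, then substitute into \eqref{eq:variances} and \eqref{eq:mean}. Your check that $D>0$ is equivalent to $B(m-\eps)>\gamma/2$ correctly supplies the step the paper only asserts; that equivalence holds without your preliminary assumption $m-\eps>\gamma/(2B)$.
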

\begin{proof}
Since $D>0$ and $\tau^2-\sigma^2\le0$, $0<h_b\le h_z=mp/(\gamma B)$. Substitute in \eqref{eq:variances} and \eqref{eq:mean}.
\end{proof}
At $\tau=0$ both tail variances vanish; their ratio is undefined. These are relative efficiencies, not an information bound or finite-sample ordering. In the fixed-block model, an estimator supplied with the signal subspace can set its tail identically to zero. Allowing an unknown tail mean changes the model and generally separates the ramp-optimal direction from normalized mean.

\paragraph{Paired Fluctuations.}
The estimators share their observations. Applying the joint CLT to $A_iV_i$ and $V_i$ gives
\begin{equation}\label{eq:pairedcov}
 \sqrt n\begin{pmatrix}\widehat b_n\\\widehat b_{\mathrm{mean}}\end{pmatrix}
 \Rightarrow N\left(0,
 \begin{pmatrix}v_b&c_b^{EM}\\c_b^{EM}&v_b^M\end{pmatrix}\otimes I_k\right),
\end{equation}
where $c_b^{EM}=B\tau^2p/(m\gamma h_b)$. For $\tau>0$, each corresponding pair of tail coordinates in the limiting Gaussian law has correlation $\sqrt p$, independent of curvature. Under \eqref{eq:moment}, the estimators' correlations also converge to this value: their squared scaled errors are uniformly integrable, and $2|xy|\le x^2+y^2$ controls mixed moments.

Since $h_b=\EE[UA]/(\gamma B)+\tau^2D/(\gamma B\sigma)$, inflation decreases with $\tau^2$ at fixed other parameters while $v_b=O(\tau^2)$ near zero. A large relative loss can coexist with small absolute tail error; this is not a joint $n,k,\tau$ limit.

Let $C(w)=\PP(M_w\le0)$ and $c=(m-\eps)/\sigma$. For $q=\norm a$ and positive mean margin,
\[
 \frac{a^\top\mu-\eps B}{\sqrt{\sigma^2q^2+\tau^2\norm b^2}}
 \le\frac{mq-\eps B}{\sigma q}\le c.
\]
Nonpositive mean gives error at least $1/2$, including deterministic margins. Hence $\ws$ also minimizes $C$. Directly expanding $\Phi(-\nu/\sqrt v)$ in the chart gives
\begin{equation}\label{eq:classification}
 \begin{aligned}
 C(w(\theta))-C(\ws)
 &=\tfrac12\theta^\top H_{01}\theta+O(\norm\theta^4),\\
 h_{01,z}&=\frac{m\varphi(c)}{\sigma B^2},\\
 h_{01,b}&=\frac{\varphi(c)[\eps\sigma^2+(m-\eps)\tau^2]}
                 {B^2\sigma^3}.
 \end{aligned}
\end{equation}
where $H_{01}=\diag(h_{01,z}I_{r-1},h_{01,b}I_k)$. Under \eqref{eq:moment}, the expected classification excess has leading coefficient $\{(r-1)h_{01,z}v_z+kh_{01,b}v_b\}/2$. This is obtained from the Gaussian risk, not from pointwise domination of classification loss by ramp loss.

\begin{corollary}[Expected Risk Separation]\label{cor:riskseparation}
Assume Theorem~\ref{thm:efficiency}, \eqref{eq:moment} and $0<\tau\le\sigma$. For either $L=R$ or $L=C$, define
\[
 J_L=\lim_{n\to\infty}
 \frac{\EE[L(\widehat w_n)-L(\ws)]}
      {\EE[L(\widehat w_{\mathrm{mean}})-L(\ws)]}.
\]
Then $J_L\ge1/p>1$. When $r=1$, $J_R=J_C=I_b$.
\end{corollary}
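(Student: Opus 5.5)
The plan is to reduce both numerator and denominator of $J_L$ to explicit quadratic forms, using the expectation limits already established, and then compare them term by term.

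\textbf{Numerator.} By Theorem~\ref{thm:efficiency} under \eqref{eq:moment}, the ERM numerator satisfies
\[
n\EE[R(\widehat w_n)-R(\ws)]\to\tfrac12\{(r-1)h_zv_z+kh_bv_b\}.
\]
The analogous statement for $L=C$ has leading coefficient $\tfrac12\{(r-1)h_{01,z}v_z+kh_{01,b}v_b\}$, as stated after \eqref{eq:classification}.

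\textbf{Denominator.} For $\widehat w_{\mathrm{mean}}$ we run the same argument with the mean's tangent variances $v_z^M,v_b^M$. The normalized mean lies on the sphere, and its chart coordinates have $n$-scaled second moments converging to $(r-1)v_z^M$ and $kv_b^M$, by the uniform-integrability argument given after \eqref{eq:mean}. The quadratic expansions \eqref{eq:quadratic} and \eqref{eq:classification} have $O(\norm\theta^4)$ remainders. Off the chart neighborhood the event has exponentially small probability and the risk is bounded. Together these give
\[
n\EE[L(\widehat w_{\mathrm{mean}})-L(\ws)]\to\tfrac12\{(r-1)h^L_zv_z^M+kh^L_bv_b^M\},
\]
where $h^L$ denotes $H$ or $H_{01}$. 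This limit is strictly positive because $\tau>0$, $k\ge1$ and $h^L_b>0$. Both limits are finite and the denominator's is positive, so $J_L$ is the ratio of the two quadratic forms.

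\textbf{Ratio bound.} The two forms have the same nonnegative weights $(r-1)h^L_z$ and $kh^L_b$, and we compare them coordinatewise. Corollary~\ref{cor:inflation} gives $v_z/v_z^M=1/p$ and $v_b/v_b^M=I_b\ge1/p$. Hence
\[
\frac{(r-1)h^L_zv_z+kh^L_bv_b}{(r-1)h^L_zv_z^M+kh^L_bv_b^M}\ge\frac1p,
\]
because a ratio of sums with termwise ratios at least $1/p$ (and positive denominators, at least one term nonzero) is itself at least $1/p$. Finally $0<p<1$ gives $1/p>1$.

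\textbf{Case $r=1$.} The principal tangent block is absent, so both forms reduce to $kh^L_b$ times $v_b$ and $v_b^M$ respectively. The curvature $h^L_b$ cancels and $J_R=J_C=v_b/v_b^M=I_b$.

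\textbf{Main obstacle.} The delicate step is the mean-estimator expectation limit, which requires transferring second-moment convergence through the nonlinear excess-risk functional. The plan is to split on the event $\norm{\bar s_n-(\mu,0)}\le m/2$. On that event we apply the quartic-remainder expansion with uniformly integrable $n\norm\theta^2$. On its complement we bound $n$ times the bounded risk times an exponentially small probability. This mirrors the off-chart argument in the proof of Theorem~\ref{thm:efficiency}. For $L=C$ one also checks that $\ws$ minimizes $C$ on the sphere, which was shown before \eqref{eq:classification}, so the expansion has no linear term.
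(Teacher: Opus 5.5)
Your proposal is correct and follows the paper's proof. Both arguments reduce $J_L$ to a ratio of the limiting quadratic forms, and both get a positive denominator from $\tau>0$ and positive tail curvature. Both then read the ratio as a weighted average of $I_z=1/p$ and $I_b\ge1/p$, with only the tail term surviving when $r=1$. Your explicit split on $\norm{\bar s_n-(\mu,0)}\le m/2$ for the signed-mean denominator spells out what the paper summarizes as ``the signed-mean moments were established above.''
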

\begin{proof}
The expectation expansion for $C$ follows from \eqref{eq:classification} by the moment argument of Section 4; its boundedness controls the off-chart event. The signed-mean moments were established above. For $L$ with tangent curvatures $g_z,g_b$, the ratio is
\begin{equation}\label{eq:riskratio}
 J_L=\frac{(r-1)g_zv_z^M I_z+kg_bv_b^M I_b}
           {(r-1)g_zv_z^M+kg_bv_b^M}.
\end{equation}
The curvatures are positive for both risks when $\tau>0$; hence the denominator is positive and this is a weighted average of $I_z,I_b\ge1/p$. For $r=1$ only the tail term remains.
\end{proof}

\begin{proposition}[Wide-Window Efficiency]\label{prop:wide}
In \eqref{eq:model}, fix $r,k$ and all parameters with $\Lambda=\sigma^2I_r$, $\sigma>0$, $m>\eps\ge0$, $0<\tau\le\sigma$ and $\gamma\ge2B(m-\eps)$. The unique sphere minimizer is $\ws=(B\mu/m,0)$, with the curvatures in \eqref{eq:hessian} satisfying $0<h_z\le h_b$.

For measurable sphere-valued estimators satisfying \eqref{eq:erm}, $n\rho_n=o_p(1)$ gives \eqref{eq:linear}, \eqref{eq:variances} and \eqref{eq:pairedcov}. Under \eqref{eq:moment}, \eqref{eq:expected} and the expected classification-risk limit from \eqref{eq:classification} also hold. Signed mean retains \eqref{eq:mean}. Let $I_z=1/p$ and use the \emph{equalities} in \eqref{eq:inflation} to define $I_b$. For $D<0$,
\begin{equation}\label{eq:tailreversal}
 I_b<1\quad\Longleftrightarrow\quad
 \tau^2<\sigma^2-\frac{m\sigma(\sqrt p-p)}{|D|}.
\end{equation}
Under \eqref{eq:moment}, the expected excess-risk ratio $J_L$ for $L=R,C$ retains \eqref{eq:riskratio}, with $(g_z,g_b)=(h_z,h_b)$ or $(h_{01,z},h_{01,b})$, respectively. In particular,
\begin{equation}\label{eq:totalreversal}
 J_L<1\ \Longleftrightarrow\ kg_bv_b^M(1-I_b)
 >(r-1)g_zv_z^M(I_z-1).
\end{equation}
\end{proposition}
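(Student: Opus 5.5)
The plan is to re-derive each ingredient of Lemma~\ref{lem:curvature} and Theorem~\ref{thm:efficiency} without the witness condition $B(m-\eps)>\gamma/2$, which now fails because $\gamma\ge 2B(m-\eps)$. Everything else is algebra on \eqref{eq:hessian} and \eqref{eq:inflation}.

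\emph{Population minimizer.} On the sphere, every $w=(a,b)$ has $\nu_w=a^\top\mu-\eps B\le B(m-\eps)\le\gamma/2$. Equality holds only at $a=Bu_\star$, since $m>0$. The variance satisfies
\[
 v_w=\sigma^2\norm a^2+\tau^2\norm b^2\in[\tau^2B^2,\sigma^2B^2].
\]
Since $\tau>0$, this is positive, so Lemma~\ref{lem:calculus} applies everywhere. It gives $\partial_\nu F_\gamma<0$. It also gives $\partial_vF_\gamma\le0$ whenever $\nu\le\gamma/2$: in that case $(\gamma-\nu)^2\ge\nu^2$.

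Now compare an arbitrary $w$ with $\ws$ in two moves. First raise the variance to $\sigma^2B^2$ at fixed $\nu_w$; this does not increase risk. Then raise the mean to $B(m-\eps)$ at that variance; this strictly decreases risk unless $a=Bu_\star$. Hence $\ws=(B\mu/m,0)$ is the unique sphere minimizer. It also minimizes $C$: $c=(m-\eps)/\sigma>0$ and the displayed bound on the standardized mean is unchanged.

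\emph{Curvature.} The chart computation in Lemma~\ref{lem:curvature} used only $F_\nu=-p/\gamma$ and $F_v=D/(2\gamma\sigma B)$ at $\theta=0$, so \eqref{eq:quadratic}--\eqref{eq:hessian} hold verbatim. Here $h_z=mp/(\gamma B)>0$, because the window $(\eps,\eps+\gamma/B)$ has positive Gaussian probability.

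For the sign of $D$, note that $\eps+\gamma/B-m\ge m-\eps=|\eps-m|\ge0$. Since $\varphi$ decreases on $[0,\infty)$, this gives $D\le0$. With $\tau^2-\sigma^2\le0$, the correction term in $h_b$ is nonnegative, so $h_b\ge h_z>0$.

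\emph{Asymptotics.} Next I check that the proof of Theorem~\ref{thm:efficiency} in Section~4 uses only three things:
\begin{itemize}
\item uniqueness and positive-definite $H$, which give \eqref{eq:globalgrowth} by compactness;
\item the almost-sure score \eqref{eq:score}, since $B(U-\eps)$ still has no atoms at the kinks;
\item Lipschitz envelopes and the ramp-free concentration argument.
\end{itemize}
None of these invokes the witness, so \eqref{eq:linear}, \eqref{eq:variances} and \eqref{eq:expected} follow as before. The paired CLT \eqref{eq:pairedcov} and the classification expansion \eqref{eq:classification} need only this representation and $\varphi(c)$. Moreover, $h_{01,b}>0$ because $\tau>0$. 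The signed-mean results \eqref{eq:mean} were model-level and do not depend on $\gamma$.

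\emph{Reversal criteria.} For \eqref{eq:tailreversal}, I use $I_b=p^{-1}(h_z/h_b)^2$ with $h_b>0$. Then
\[
 I_b<1\iff h_b>\frac{h_z}{\sqrt p}\iff mp+\frac{(\sigma^2-\tau^2)|D|}{\sigma}>m\sqrt p,
\]
and dividing by $|D|>0$ and rearranging gives the stated inequality.

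For the ratio \eqref{eq:riskratio}, its derivation in Corollary~\ref{cor:riskseparation} used only the expectation expansions and positive curvatures, so it carries over. Since the denominator is positive, $J_L<1$ iff the numerator is smaller than the denominator. Written out, this is
\[
 (r-1)g_zv_z^M(I_z-1)<kg_bv_b^M(1-I_b),
\]
which is \eqref{eq:totalreversal}.

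The main obstacle is the uniqueness step. The earlier argument relied on risk below $1/2$, where deletion helps. Here the opposite sign of $\partial_vF_\gamma$ has to be exploited instead, combined with the bound $v\le\sigma^2B^2$ that comes from $\tau\le\sigma$. I would also verify carefully that no step of Section~4 silently used $D>0$ or $p<1$; with $p=1$ possible, the strict inequality $1/p>1$ is indeed no longer claimed.
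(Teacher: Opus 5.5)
Your proposal is correct and follows the paper's proof. The one variation is the uniqueness step: you compare in the $(\nu,v)$ plane, first raising $v$ to $\sigma^2B^2$ and then raising $\nu$ to $B(m-\eps)$, whereas the paper aligns $a$ with $\mu$ and then shows $dR/dq<0$ along the aligned arc; both rest on the same three facts, namely $\partial_\nu F_\gamma<0$, $\partial_vF_\gamma\le0$ for $\nu\le\gamma/2$, and $v\le\sigma^2B^2$ from $\tau\le\sigma$. One small correction to your closing remark: $p=\PP(\eps<U<\eps+\gamma/B)<1$ always holds because $U$ is Gaussian with $\sigma>0$, so $p=1$ cannot occur and $I_z=1/p>1$ remains strict in this regime.
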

\begin{proof}
At fixed $q=\norm a$, alignment with $\mu$ maximizes the mean at unchanged variance. On the sphere, alignment gives $\nu(q)=mq-\eps B$ and $v(q)=\tau^2B^2+(\sigma^2-\tau^2)q^2$. Since $\nu(q)\le\gamma/2$, Lemma~\ref{lem:calculus} yields
\[
 \frac{dR(q)}{dq}=mF_\nu+2q(\sigma^2-\tau^2)F_v<0.
\]
Hence the unique optimum has $q=B$. The window gives $D\le0$ in \eqref{eq:window}, so \eqref{eq:hessian} yields $h_b\ge h_z>0$. Compactness and local quadratic growth give \eqref{eq:globalgrowth}. Section 4 applies with the same score and Hessian; \eqref{eq:mean} and \eqref{eq:classification} then give the stated risk limits and \eqref{eq:riskratio}. Substituting $\gamma Bh_b=mp+(\sigma^2-\tau^2)|D|/\sigma$ into the variance ratio proves \eqref{eq:tailreversal}; subtracting one in \eqref{eq:riskratio} proves \eqref{eq:totalreversal}.
\end{proof}

A nonpositive right side in \eqref{eq:tailreversal} rules out tail reversal; $D=0$ leaves $I_b=I_z=1/p$. At the scalar baseline with $\gamma=5$, $I_b=0.95833$ and $I_z=1.04236$. Keeping these parameters and taking $r=4,k=32$ gives $J_R=1.02554$ and $J_C=1.03954$. Thus better tail estimation need not improve total excess risk: the tail gain must outweigh the principal-tangent cost in \eqref{eq:totalreversal}.

\begin{figure*}[!t]
\includegraphics[width=\textwidth]{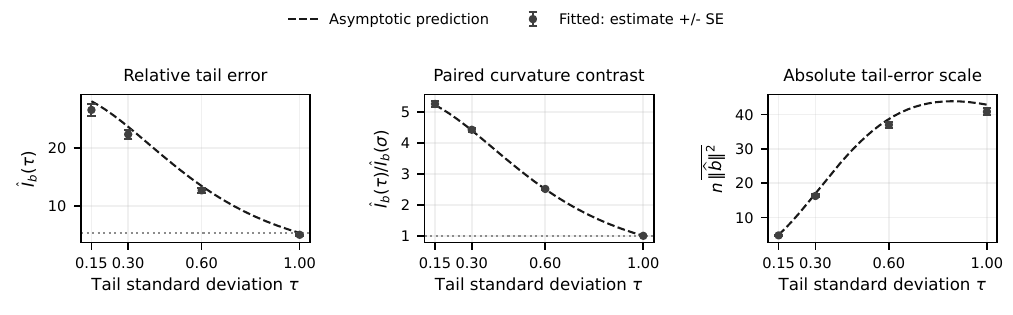}
\caption{Fixed-$p$ sweep, $r=1,k=32,n=8192$: 100 paired repetitions across $\tau$. Left: ratio of mean squared tail norms to signed mean. Center: paired ratio relative to $\tau=\sigma$. Right: absolute tail-error scale. Bars show dataset-level Monte Carlo SEs, including covariance across $\tau$ for the center contrast}
\label{fig:fixedp}
\end{figure*}

\section{Experiments}
The scalar baseline is $r=1$, $k=32$, $m=2$, $\sigma=B=\gamma=1$, $\tau=0.15$, with evaluation attack $0.25$. The main grid has 22 settings and 20 independent datasets per setting. Methods share each dataset: robust ramp, principal-space oracle ramp, ordinary ramp, normalized robust logistic and normalized signed mean. Separate 100-repetition studies cover tail variance, window width, and $r=4$ at $n=8192,32768$. We use $\mu=m\mathbf1/\sqrt r$.

\paragraph{Fitting and Reproduction.}
SLSQP uses 12 starts, at most 1000 iterations, supplied gradients and $\texttt{ftol}=10^{-10}$. Starts comprise normalized signed mean and 11 Gaussian directions with radii uniform on $[B/4,B]$. Each terminal vector is projected into the ball; the smallest training objective is selected, then normalized. Robust methods train at attack $0.25$ and ordinary ramp at zero, except in the attack sweep. This local solver does not certify \eqref{eq:erm}.

Logistic minimizes $n^{-1}\sum_i\log(1+\exp[-M^{\mathrm{train}}_{w,i}/\gamma])$. Its normalized output is a direction-estimation baseline, outside Proposition~\ref{prop:norm}'s ramp guarantee. Positive scaling preserves classification events but changes surrogate values; normalization increases logistic loss in 29 of the 440 main-grid fits.

Configurations, seed maps, all start records and fitted vectors are archived. The original scalar and $r=4$ sweeps have overlapping seed ranges and are analyzed separately; later validations use disjoint seeds. Recomputed sample losses match their saved training-objective values within $10^{-10}$. Independent Gaussian-risk quadrature matches the saved population risks within $10^{-10}$. Runs use NumPy/SciPy on CPUs, four workers and one BLAS thread each.

\subsection{Curvature at a Fixed Population Window}
At the scalar baseline, \eqref{eq:window}--\eqref{eq:inflation} give $p=0.1865682$, $h_b=0.1631106$, and $I_b=28.0500$. The selection and curvature factors are 5.3600 and 5.2332; the predicted tail and excess-ramp coefficients are 5.0490 and 0.4118.

We fix $n=8192$ and vary only $\tau\in\{0.15,0.30,0.60,1.00\}$. The same 100 draws of $(Y,\xi,G)$ keep $A_i$ in \eqref{eq:window}, evaluated at $\ws$, identical across $\tau$. The fitted active sets are different: their mean disagreement with $A_i$ is $0.112\%,0.401\%,1.228\%,2.164\%$, respectively.

Figure~\ref{fig:fixedp} estimates $I_b$ by ratios of mean squared tail norms. Observed values are $26.55,22.34,12.69,5.04$ (SEs $0.99,0.82,0.43,0.18$), against predictions $28.05,23.66,13.44,5.36$. These deviations are correlated because the datasets are shared. Using $\tau=\sigma$ as a paired reference removes the selection factor: the first three relative ratios are $5.269,4.433,2.518$ (SEs $0.088,0.066,0.025$), against curvature predictions $5.233,4.413,2.508$. SEs propagate the joint dataset-level covariance of all four means in each ratio of ratios.

\begin{figure*}[!t]
\includegraphics[width=\textwidth]{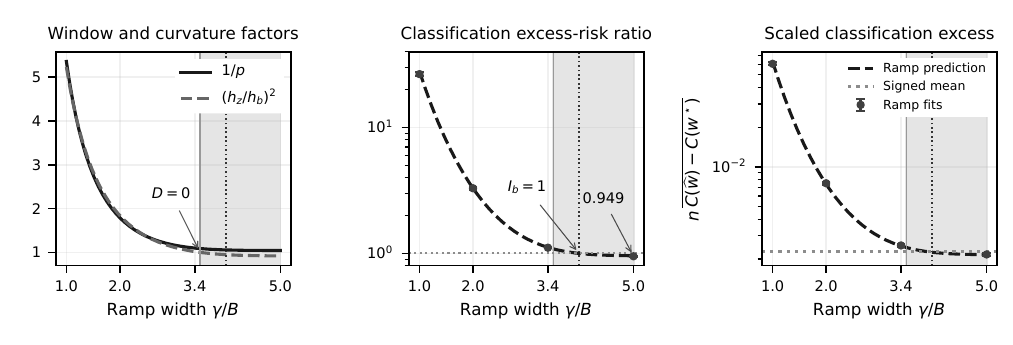}
\caption{Window-width sweep, 100 paired datasets, $r=1,k=32,n=8192$, evaluation attack $0.25$. Bars show Monte Carlo SEs. The gray boundary at 3.5 marks $D=0$; the dotted boundary at 3.9831 marks $I_b=1$. Shading denotes Proposition~\ref{prop:wide}'s regime. The annotation 0.949 is the observed classification excess-risk ratio}
\label{fig:windowwidth}
\end{figure*}

\begin{table*}[!t]
\begin{minipage}[t]{\columnwidth}
\vspace{0pt}
\caption{$r=4$, 100 independent datasets per $n$: predictions and observed means (Monte Carlo SEs). $\Delta C=C(\widehat w)-C(\ws)$; Corr.\ uses dataset-cluster SEs}
\label{tab:r4blocks}
\centering
\begin{tabular}{@{}lrrr@{}}
\toprule
Quantity&Pred.&$n=8192$&$n=32768$\\
\midrule
$n\norm{\widehat t}^2$&4.020&3.659 (0.265)&3.914 (0.288)\\
$n\norm{\widehat b}^2$&5.049&4.981 (0.127)&5.158 (0.126)\\
$n\cE$&1.162&1.090 (0.051)&1.151 (0.055)\\
$n\Delta C$&0.410&0.378 (0.023)&0.402 (0.025)\\
Corr.&0.432&0.445 (0.014)&0.429 (0.014)\\
\bottomrule
\end{tabular}
\end{minipage}\hfill
\begin{minipage}[t]{\columnwidth}
\vspace{0pt}
\caption{Contaminated training, clean evaluation: ramp/signed-mean excess-risk ratio (paired SE), $r=1,k=32,n=8192$, 100 datasets; shifts use $q=0.05$}
\label{tab:contamination}
\centering
\begin{tabular}{@{}lr@{}}
\toprule
Training setting&Risk ratio (SE)\\
\midrule
Clean&28.76 (0.99)\\
Label flips, $q=0.10$&19.93 (0.68)\\
Symmetric shift, $d=6$&0.372 (0.056)\\
Directed shift, $d=1$&0.789 (0.013)\\
Directed shift, $d=6$&0.00528 (0.00009)\\
Directed shift, $d=18$&0.000208 (0.000005)\\
\bottomrule
\end{tabular}
\end{minipage}
\end{table*}

Reducing $\tau$ lowers the absolute noise scale but increases the curvature penalty relative to signed mean. Since the population window is unchanged, this deterioration cannot be attributed to a smaller active fraction. The right panel shows why a larger efficiency ratio need not mean a larger tail error.

\subsection{Training Window and Classification Risk}
Holding the data and evaluation attack fixed, we fit $\gamma/B=1,2,3.4,5$ (Fig.~\ref{fig:windowwidth}). The first three satisfy \eqref{eq:isotropic}; the last uses Proposition~\ref{prop:wide}. Classification excess-risk ratios to signed mean are $26.565,3.294,1.111,0.949$ (SEs $0.990,0.078,0.012,0.007$), against $28.050,3.294,1.117,0.958$. Thus 28.05 is a fixed-window benchmark, and the common classification criterion shows how the gap changes with training width.

The boundaries distinguish curvature from net efficiency. At $\gamma/B=3.5$, the curvature factor equals one, but active-window selection still incurs a cost. The curvature gain offsets this cost at approximately 3.9831, also the asymptotic classification-risk transition for $r=1$. With additional principal directions, a tail improvement alone is insufficient; the weighted comparison in \eqref{eq:totalreversal} applies.

For training attacks $0,0.25,0.5$ at width one, predicted tail inflations are $70.10,28.05,13.67$. The gap therefore also exists without attacks and shrinks with attack budget in these settings; it is a loss-and-constraint effect rather than a general robustness penalty.

\subsection{Multidimensional and Solver Diagnostics}
The $r=4$ studies test both tangent blocks. At $n=8192$, the 20- and 100-repetition batches have excess-ramp means $0.00017262$ (SE $0.00001741$) and $0.00013306$ (SE $0.00000617$), a difference of 2.14 combined Monte Carlo SEs.

For each fit we compute $b_{\mathrm{lin}}=(n\gamma h_b)^{-1}\sum_i A_iV_i$ after optimization. In the independent $r=4$ batches, raising $n$ from 8192 to 32768 improves mean tail cosine from 0.9825 to 0.9915 and reduces relative residual $\norm{\widehat b-b_{\mathrm{lin}}}/\norm{\widehat b}$ from 0.1847 to 0.1283. Table~\ref{tab:r4blocks} reports both blocks, ramp and classification risk, and the paired tail correlation in \eqref{eq:pairedcov}. The principal-block deviations in Table~\ref{tab:r4blocks} are 1.36 and 0.37 SEs; they do not by themselves establish growing-dimension failure.

Across the original 40,800 starts, 143 exit unsuccessfully; all 3220 selected fits report success. Selected endpoints exceed the norm bound by at most $5.0\times10^{-11}$ before projection. Increasing 12 starts to 48 on 20 datasets per configuration $(r,n)=(1,512),(1,8192),(4,8192)$ gives objective improvements with median $6.43\times10^{-13}$, 95th percentile $7.51\times10^{-12}$ and maximum $1.68\times10^{-11}$.

For $r=k=1$, kink crossings partition the circle into sinusoidal pieces, whose endpoints and stationary minima exhaust global candidates. Double-precision enumeration gives objective gaps below $10^{-9}$ on 20 datasets at each of $n=128,512$. The multi-start checks assess stability; the enumeration check is restricted to two dimensions.

\subsection{Contaminated Training, Clean Evaluation}
We retain the scalar baseline and solver, using 100 fresh paired datasets per setting. Independently for each observation, $I\sim\mathrm{Bernoulli}(q)$ selects a tail shift $IdY\mathbf1_k$ (directed) or $IdS\mathbf1_k$ (symmetric), with independent uniform sign $S$. Shifts are added to $X_{\rm tail}$, after multiplication by $\tau$; $d=6$ is 40 clean tail standard deviations per coordinate. Label flips form a separate control. All risks use the clean distribution and attack $0.25$.

Table~\ref{tab:contamination} shows reversal for both shift types, not just mean-direction bias. Under directed shifts, the law of large numbers gives the signed-mean limit $S_B(\mu,qd\mathbf1_k)$. At $q=0.05,d=6$ its clean excess risk is 0.008524. Observed means at $n=2000,8192,32768$ are 0.008953, 0.008411, 0.008497; ramp gives $6.73,4.44,3.88$ times $10^{-5}$. Each uses 100 datasets, independent across sample sizes. These comparisons do not establish ramp consistency. Bounded loss also does not imply bounded influence: the tail linear term $AV/(\gamma h_b)$ is unbounded for active margins. The Gaussian efficiency formulas are not asserted for contaminated training.

\section{Discussion}
Tail deletion identifies unnecessary directions near the population optimum; the covariance formula measures their finite-sample estimation cost. Tail covariance and window placement determine this cost; aggregate gains also depend on principal and tail dimensions through \eqref{eq:totalreversal}. Experiments compare local fits with conditional asymptotic predictions, without certifying global near-optimality. Anisotropic principal covariance and growing dimension remain open.

The contamination controls distinguish two effects. Symmetric shifts preserve the signed-mean direction but increase sampling variability; directed shifts change its probability limit. Both can reverse finite-sample risk orderings, so mean-direction preservation alone does not preserve efficiency. Their geometry also depends on dimension: a shift of $d$ in every tail coordinate has norm $d\sqrt{k}$. The large-shift comparisons therefore concern a specified training-contamination model and clean evaluation target, rather than a dimension-independent robustness guarantee.

\clearpage
\raggedbottom
\subsection*{AI Use Statement}
OpenAI ChatGPT and Codex assisted with theoretical development, mathematical claims, proof drafting and checking, experimental design, implementation of the synthetic-data and optimization pipeline, numerical diagnostics and interpretation. They also assisted with literature discovery, writing and editing, figures, reference formatting and submission preparation. Data were sampled programmatically from the Gaussian model, not supplied numerically by a language model. Automated tests, coefficient reconstruction and independent quadrature checked the implementation and reported quantities; these checks do not replace human verification. The authors are responsible for all final content, including AI-assisted text, proofs, code and experimental reports.

\nocite{numpy2020,scipy2020,matplotlib2007}
\bibliographystyle{spmpsci}
\bibliography{references}
\end{document}